\DeclareMathOperator*{\argmax}{arg\,max}
\DeclareMathOperator*{\argmin}{arg\,min}
\DeclareMathOperator{\norm}{Norm}
\DeclareMathOperator{\ks}{KS}
\DeclareMathOperator{\similarity}{Sim}
\theoremstyle{plain}
\newtheorem{theorem}{Theorem}[section]
\newtheorem{proposition}[theorem]{Proposition}
\theoremstyle{definition}
\theoremstyle{remark}
\newtheorem{remark}[theorem]{Remark}
\newtheorem*{proposition*}{Proposition}
\icmltitlerunning{Fishing for User Data in Large-Batch Federated Learning}
\begin{document}

\twocolumn[
\icmltitle{Fishing for User Data in Large-Batch Federated Learning\\ via Gradient Magnification}



\icmlsetsymbol{equal}{*}

\begin{icmlauthorlist}
\icmlauthor{Yuxin Wen}{equal,umd}
\icmlauthor{Jonas Geiping}{equal,umd}
\icmlauthor{Liam Fowl}{equal,umd}
\icmlauthor{Micah Goldblum}{nyu}
\icmlauthor{Tom Goldstein}{umd}
\end{icmlauthorlist}

\icmlaffiliation{umd}{University of Maryland}
\icmlaffiliation{nyu}{New York University}

\icmlcorrespondingauthor{Yuxin Wen}{ywen@umd.edu}
\icmlcorrespondingauthor{Jonas Geiping}{jgeiping@umd.edu}
\icmlcorrespondingauthor{Liam Fowl}{lfowl@umd.edu}

\icmlkeywords{Machine Learning, ICML}

\vskip 0.3in
]



\printAffiliationsAndNotice{\icmlEqualContribution} 

\begin{abstract}
Federated learning (FL) has rapidly risen in popularity due to its promise of privacy and efficiency. Previous works have exposed privacy vulnerabilities in the FL pipeline by recovering user data from gradient updates. However, existing attacks fail to address realistic settings because they either 1) require toy settings with very small batch sizes, or 2) require unrealistic and conspicuous architecture modifications.
We introduce a new strategy that dramatically elevates existing attacks to operate on batches of arbitrarily large size, and without architectural modifications. Our model-agnostic strategy only requires modifications to the model parameters sent to the user, which is a realistic threat model in many scenarios. 
We demonstrate the strategy in challenging large-scale settings, obtaining high-fidelity data extraction in both cross-device and cross-silo federated learning. 
Code is available at \url{https://github.com/JonasGeiping/breaching}.

%
%
%
\end{abstract}

\section{Introduction}
\label{intro}
Is it possible to train machine learning models on massive amounts of private data without compromising the data and feeding it to a central server? Federated learning is poised to be one step towards a solution to this question. 

\begin{figure}[h!]
    \centering
    \includegraphics[width=0.45\textwidth]{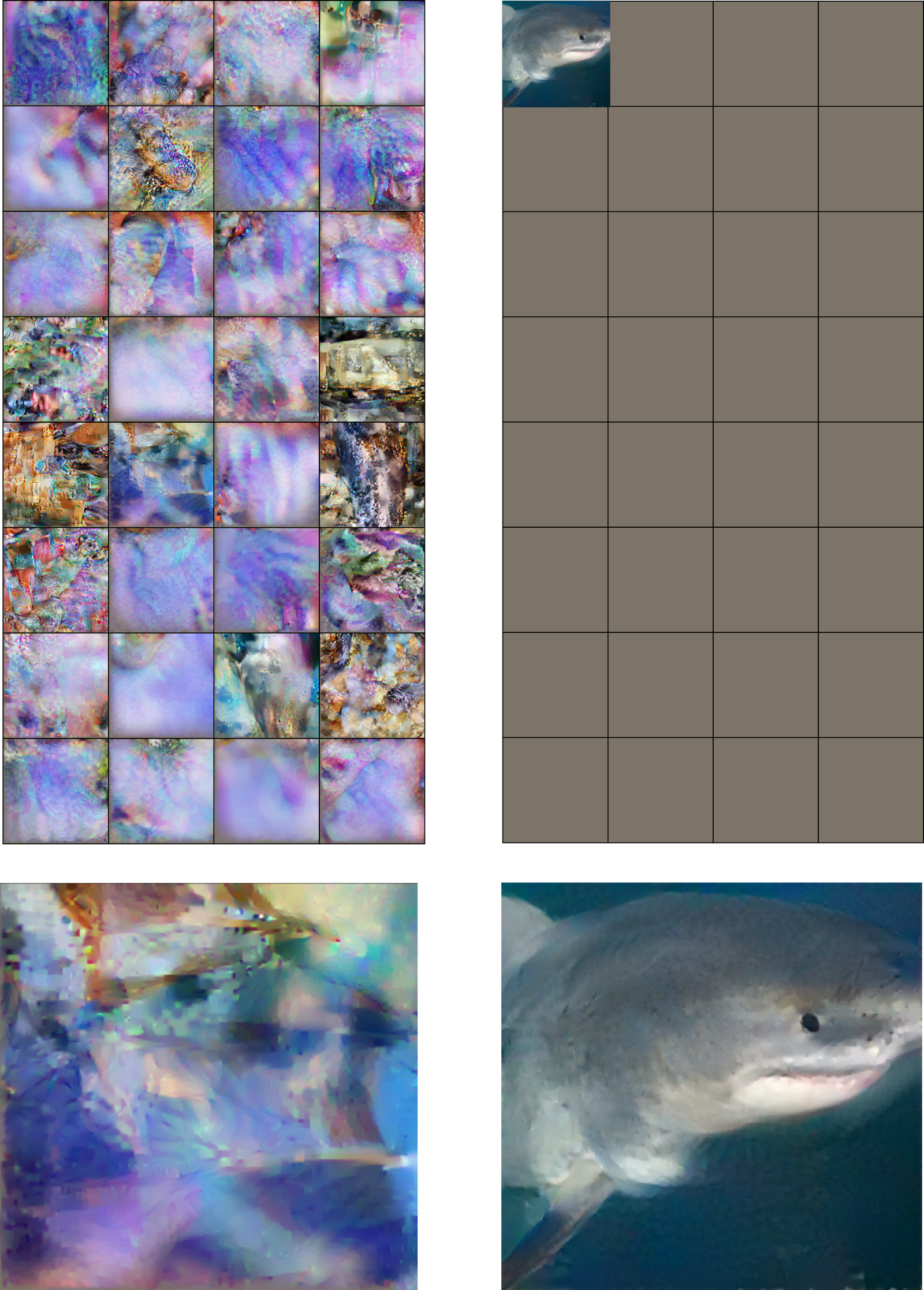}
    \caption{Previous attacks (left) for a batch size of 32 in the ``honest-but-curious'' threat model, vs. a fishing attack (right) under the ``malicious-parameters'' threat model. Malicious modifications of the server update lead to the complete compromise of a single example which is \textit{fished} out of an arbitrarily large batch of aggregated data.}
    \label{fig:teaser}
\end{figure}
In federated learning (FL) and other collaborative learning frameworks, machine learning models are trained in a decentralized manner. A central server sends only the current state of the model to all participating users, the users compute model updates based on their own private data, and only their update is returned to the server, optionally after being securely aggregated across multiple users \citep{kairouz_advances_2021,bonawitz_practical_2017}. This is often referred to as the principle of data minimization - the server never sees user data directly, and model updates sent by the user are considered to only contain the minimal information necessary to improve the model \citep{bonawitz_federated_2021}.
However, federated learning is only secure as long as the model updates sent by users cannot be inverted to retrieve their original contents. Such attacks breach privacy in federated learning and directly reveal user data. Previously, such attacks have been demonstrated mostly on either small batches of data, e.g. in \citet{wang_beyond_2018,zhu_deep_2019} under the threat model of an ``honest-but-curious" server or in settings with ``malicious model" threat models  with modified architectures \citep{fowl_robbing_2021,boenisch_when_2021}. For the threat model of an unmodified architecture, existing work seems to suggest that even a moderate batch size of 32 could be sufficient to deter an attack \citep{huang_evaluating_2021}.

In this work, we discuss a third category of threat model, ``malicious parameters". In this setting, the server cannot be trusted to send benign model updates to users. From a user perspective, this is a realistic scenario; the server update is outside of their control, computed on central machines by a company with potentially limited intent to uphold privacy. Such a company could publicly support federated learning, for example to comply with regulations such as GDPR \citep{truong_privacy_2021}, but still obtain private user information through attacks such as the one we discuss hereafter. As such, we argue that a user should regard server updates as untrusted.

We introduce a family of attacks that are able to fish for single data points from a large aggregate, magnifying the gradient contribution of a target data point and reducing the gradient contribution of other data through maliciously chosen parameter vectors. We discuss this attack for both common federated learning scenarios of cross-device and cross-silo learning. We then verify in a range of experiments for the task of image classification that this attack allows us to leverage existing optimization-based \citep{geiping_inverting_2020-1} and analytic attacks \citep{lu_april_2021}, which currently only work well for inverting an updated calculated on a single or few data points.

\section{Background and Related Work}
\label{previous_work}
We are interested in two main settings of federated learning \citep{kairouz_advances_2021}. In the \textit{cross-device} scenario, a shared model is trained on a massive number of users, e.g. mobile devices, which each hold a small amount of private data, but are highly unreliable and on average are not expected to participate more than once in training given their number \citep{bonawitz_towards_2019}. In particular, these users cannot be queried or addressed separately, and are selected at random. To increase privacy, contributions of groups of users can be aggregated securely \citep{mcmahan_communication-efficient_2017}. In contrast, in the \text{cross-silo} scenario, only a small group of users (e.g. $<100$) exists, who are reliable and identifiable and can be queried in every round of federated learning. An often-quoted example here is that consortia of hospitals train models without permission to share sensitive patient data directly \citep{jochems_distributed_2016}. These users each own large amounts of private data, which they aggregate themselves and protect in this way without having to rely on aggregation with other users.

Gradient inversion attacks against federated learning systems have been demonstrated for some time now and in several scenarios. Most work so far though has focused on the threat model of an        ``honest-but-curious" server, in which the server participates in the federated learning protocol as designed, but records and processes updates sent by users. \citet{phong_privacy-preserving_2017-1,geiping_inverting_2020-1} discuss how analytical methods can recover the input data to a linear layer anywhere in the network. Analytical attacks can also directly recover input data for several vision transformer architectures \citep{lu_april_2021}. For CNNs, analytical attacks can be extended recursively \citep{zhu_r-gap_2021} to resolve input data from convolutional layers. However, all of these attacks can only invert the gradients of single data points, recovering a mixture of all aggregated data, when the gradient is aggregated over many examples as in realistic settings as discussed before.

Optimization-based attacks as introduced in \citet{wang_beyond_2018} fare better, but can also only invert gradients for small batch sizes, such as 8 in \citep{zhu_deep_2019}. Other attacks can recover a few samples from larger batches when attacking larger models \citet{geiping_inverting_2020-1,yin_see_2021}, but overall, these methods also cannot invert gradients of large batches. This limits the attack vector of these attacks and allows for an effective defense through (secure) aggregation of user gradients. 

Recently, other threat models have also come into focus, such as threat models focused on malicious parameter vectors. As discussed, this is a realistic threat for users who should view any updates sent by the server as unverified parameters that the users deploy directly on their private data. Attacks in these threat models can recover data from arbitrarily large batch sizes as shown in \citet{fowl_robbing_2021} and as such overcome secure aggregation (more formally, the notion that aggregation itself is a defense is overcome, reducing secure aggregation in these scenarios to a multi-party shuffle - the outcome of secure aggregation is attacked, but not the protocol itself) . However, this previous work often requires unrealistic model modifications. Attacks in \citet{fowl_robbing_2021} and \citet{boenisch_when_2021} can only recover data from linear layers and thus require the existence of large linear layers - even one of which could be larger than standard vision models - early in a network, to avoid downsampling and striding operations that degrade information. The attack in \citet{pasquini_eluding_2021} requires the ability of the server to send separate malicious parameters to individual users who each own only few data points - a threat that can in turn be overcome quickly if aggregation protocols are used in reverse to average server updates before processing them on the user side. The attack in \citet{lam_gradient_2021} can break secure aggregation without these previous assumptions, but requires some knowledge of side-channel information that is not strictly necessary for the FL protocol. 

The attacks in \citet{fowl_robbing_2021,boenisch_when_2021} and \citet{pasquini_eluding_2021} can collectively be understood as attacks based on gradient sparsity. Although the aggregation protocol nominally runs as it was designed, all but one of the data points \citep{boenisch_when_2021} or users \citep{pasquini_eluding_2021} return a zero gradient for some parameters in the model, such that averaging still returns these entries directly. In \citet{fowl_robbing_2021}, individual gradients are non-zero, but parts of the gradient obey a cumulative sum structure from which a sparse single update can be recovered. These attacks dense aggregation into a sparse aggregation and extract separate gradients per data point, which can then be inverted by the methods described above. 

\looseness -1 In this work, we show that this methodology can be extended much more generally that previously believed. We describe attack strategies that can fish for single gradients from arbitrarily large aggregated batches and, crucially, apply to arbitrary models, without requiring the model modifications of \citet{fowl_robbing_2021}, and without requiring separate updates sent to all users \citet{pasquini_eluding_2021}, or knowledge of side-channel information as in \citet{lam_gradient_2021}. Even attacks such as \citet{lu_april_2021} that previously were not applicable to more than one data point, can now threaten arbitrary large aggregations.

\section{Method}
\label{sec:method}
In this section, we describe the two straightforward mechanisms of our attack strategy. These two aspects allow an attacker to effectively reduce a aggregated gradient to an update calculated on a \emph{single} data point. These combined strategies allow an attacker to breach privacy in both cross-device and cross-silo settings, even with large batches. In this work, we focus on tasks that can be posed as multi-class classification problems.

\paragraph{Threat Model:}
In this work we assume that users communicate with the server through established federated learning protocols (we focus mainly on \textit{fedSGD}) which are orchestrated in secure compute environments \citep{frey_introducing_2021} leaving the protocol itself as the only avenue of attack for the server. The server is intent on recovering private user information and is allowed to send modified updates to all users. In the cross-device setting, these updates can be sent to multiple groups of users, but each user is not expected to be queried more than once. In the cross-silo setting, modified updates can be sent successively as queries over multiple rounds of training, as all users participate in all rounds in this setting.

\subsection{Class fishing strategy}
\label{sec:cls_strategy}

We first introduce a class fishing strategy wherein the gradients for a single target class dominate the aggregated model update. The intuition behind this strategy is for the malicious server to artificially decrease the confidence of the network's predictions for the target class, thus significantly increasing the contribution of the gradient information calculated from data belonging to the selected class. In fact, the server can make the relative size of the selected class' gradient arbitrarily large in this manner. 

More formally, given a target class $c$, and fully connected classification layer $W \in \mathbb{R}^{n \times m}$ with bias $b \in \mathbb{R}^{n}$, the class fishing strategy modifies the parameters
\[
    W_{i, j}= 
\begin{cases}
    W_{i, j},            & \text{if } i = c\\
    0,              & \text{otherwise}
\end{cases}
\]
\[
    b_{i}= 
\begin{cases}
    b_i,            & \text{if } i = c\\
    \alpha,              & \text{otherwise}
\end{cases},
\]
where $\alpha$ is a server-side hyperparameter which controls the relative magnitude of the selected class's gradient signal. Let $x_i$ denote an image from target class $c$. If $\alpha$ is chosen so that $\langle W_j, x_i \rangle + b_j = \alpha,\ \forall j \neq i$, and $\langle W_i, x_i \rangle + b_i \ll \alpha$, then we have that the ratio $\frac{p_i}{p_o} \ll 1,\ \forall o \neq i$ where $p_i$ is the softmax output for $x_i$ corresponding to the target class $c$, and $p_o$ denotes the softmax output of any other image in the batch corresponding to its ground-truth label ($\neq c$). That is to say, if the server chooses a large enough bias $\alpha$ for the non-target classes, the ratio of the ground-truth class' softmax entry for an image coming from the target class is much lower than the ground-truth class' softmax entry for any other class. This leads to a predictable model update that is dominated by the contributions of one class. We quantify this in \cref{prop1} which decomposes a model update that is aggregated over images from many different classes as being almost exclusively from the target class. 

\begin{proposition}
\label{prop1}
Let $\phi$ denote the parameters of the feature extractor for the central model, and let $W, b$ be the weight and bias vectors, respectively, of the last classification layer. Furthermore, let $\{x_i\}_{i=1}^N$ be the collection of user data on which model updates can be calculated. Let $\{x_{i}\}_{i=1}^{m \leq N}$ be any batch data containing $k$ data points $\{x'_i\}^{k\leq m}_{i=1}$ from the target class. Then, for cross-entropy loss $\mathcal{L}$, if the gradient of $\mathcal{L}$ w.r.t. each data point in the batch is bounded in $l_2$ norm, then $\forall \varepsilon > 0$, $\exists \alpha$ so that a malicious server employing our class strategy on $W, b$ recovers a feature extractor update $g$ so that 
\[
\lVert g - \frac{1}{m}\sum_{i=1}^{k}\nabla_\phi \mathcal{L}(x'_i, c; \phi, W,b) \rVert_2 < \varepsilon.
\]
\end{proposition}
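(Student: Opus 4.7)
The plan is to expand the cross-entropy gradient through the chain rule and exploit the fact that zeroing out all but the $c$-th row of $W$ short-circuits the gradient flow from every logit except the target one. Concretely, for any sample $(x,y)$ one has
\[
\nabla_\phi \mathcal{L}(x,y;\phi,W,b) \;=\; \sum_{j=1}^{n}\bigl(p_j(x)-\mathbf{1}[y=j]\bigr)\,\nabla_\phi \mathrm{logit}_j(x),
\]
where $\mathrm{logit}_j(x)=\langle W_j,f_\phi(x)\rangle + b_j$. After the malicious modification, $W_j = 0$ for every $j\neq c$, so $\nabla_\phi \mathrm{logit}_j(x) = 0$ for those $j$. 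Thus the per-sample gradient collapses to $(p_c(x)-\mathbf{1}[y=c])\,\nabla_\phi \mathrm{logit}_c(x)$. Splitting the aggregated update $g=\frac{1}{m}\sum_i \nabla_\phi\mathcal{L}(x_i,y_i)$ into target samples (label $c$) and non-target samples (label $\neq c$) reveals that the target block is \emph{exactly} $\frac{1}{m}\sum_{i=1}^{k}\nabla_\phi\mathcal{L}(x'_i,c;\phi,W,b)$, so the residual to bound is purely
\[
g - \frac{1}{m}\sum_{i=1}^{k}\nabla_\phi\mathcal{L}(x'_i,c;\phi,W,b) \;=\; \frac{1}{m}\sum_{y_i\neq c} p_c(x_i)\,\nabla_\phi \mathrm{logit}_c(x_i).
\]

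Next I would control $p_c(x)$ uniformly in the batch. Since $W_c$ and $b_c$ are untouched, $\mathrm{logit}_c(x)=\langle W_c,f_\phi(x)\rangle + b_c$ is independent of $\alpha$, while every other logit equals $\alpha$. Therefore
\[
p_c(x) \;=\; \frac{e^{\mathrm{logit}_c(x)}}{e^{\mathrm{logit}_c(x)} + (n-1)e^{\alpha}} \;\le\; \frac{1}{n-1}\,e^{\mathrm{logit}_c(x)-\alpha}.
\]
The boundedness-of-gradients hypothesis (together with the fact that $\nabla_\phi \mathrm{logit}_c$ inherits the same Jacobian-feature structure that makes sample gradients bounded) gives two batch-uniform constants: $L:=\max_i \mathrm{logit}_c(x_i) < \infty$ and $M := \max_i \lVert \nabla_\phi \mathrm{logit}_c(x_i)\rVert_2 < \infty$. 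Combining these, the residual is bounded by
\[
\Bigl\lVert g - \tfrac{1}{m}\!\sum_{i=1}^{k}\nabla_\phi\mathcal{L}(x'_i,c)\Bigr\rVert_2 \;\le\; \tfrac{m-k}{m}\cdot\tfrac{M}{n-1}\cdot e^{L-\alpha}.
\]
Choosing $\alpha$ large enough (explicitly $\alpha \ge L + \log\frac{M}{(n-1)\varepsilon}$) drives this below any $\varepsilon>0$, which concludes the argument.

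The main obstacle is not the algebra but pinning down precisely what the "bounded gradient" assumption is supposed to give. The natural reading (each per-sample gradient of $\mathcal{L}$ is uniformly bounded) is slightly circular under the malicious parameters, so I would reinterpret it as a bound on the feature-extractor Jacobian and the features on the batch, which is what actually drives the estimate. A secondary subtlety is that $W_c$ and $b_c$ are preserved verbatim, so one should verify that $\mathrm{logit}_c(x)$ stays $\alpha$-independent and that the right-hand side of the proposition uses the \emph{modified} parameters — both hold by construction, making the target contribution match on the nose rather than merely approximately.
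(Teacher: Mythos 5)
Your proposal is correct and follows essentially the same route as the paper: both rely on the observation that zeroing $W_j$ for $j\neq c$ kills $\nabla_\phi l_j$ so only the target logit contributes, both reduce the residual to the non-target terms with coefficient $p_c(x)$, and both bound $p_c(x)\le e^{l_c(x)-\alpha}/(n-1)$ and send $\alpha\to\infty$. Your use of the unified softmax-gradient identity $(p_j-\mathbf{1}[y=j])$ in place of the paper's explicit Case 1/Case 2 computation, and your explicit choice $\alpha \ge L + \log\frac{M}{(n-1)\varepsilon}$, are only cosmetic refinements of the same argument.
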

Interestingly, the choice of parameters defined above already improves the performance of optimization-based attacks such as \citet{geiping_inverting_2020-1}, even when operating on a single image. 

\begin{figure}[h!]
    \centering
    \includegraphics[width=0.48\textwidth]{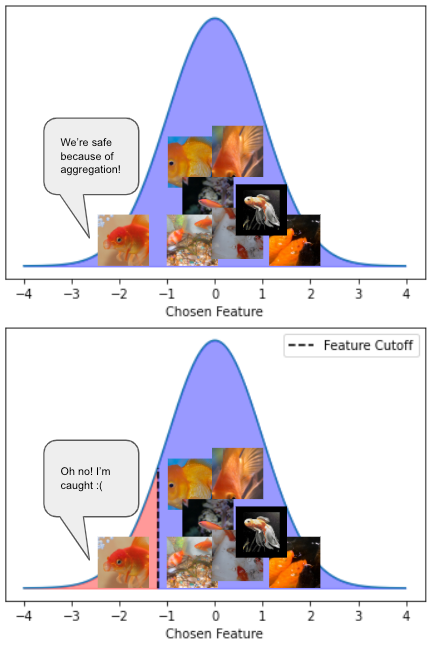}
      \caption{A high-level schematic of our one-shot binary attack. Within a class, ``Goldfish", features can be suppressed to reveal a single user image.}
      \label{fig:feature_teaser}
\end{figure}

\subsection{Feature fishing strategy}
\label{sec:feature_strategy}
Our previous class fishing strategy has a good chance of succeeding when the number of classes in the underlying label-space is large compared to the batch size on which updates are calculated. This is the case for the setting of previous attacks which have focused on inverting model updates for ImageNet \citep{yin_see_2021}. However, if there are intra-batch collisions in label space, or other users also have data from the target class, another strategy is needed as solely applying the class fishing strategy results in a ``mixed" update on all images from the target class. To that end, we introduce a feature fishing strategy to selectively retrieve gradients for images with the same class label in the same batch. Similar to the class fishing attack, this strategy allows a malicious server to magnify the gradients for a target image by appropriately shifting the decision boundary to decrease the confidence of the network's predictions for said target images. Like the class fishing strategy, the malicious server only needs to modify the parameters of the classification layer. 

While this strategy can succeed by adjusting the decision boundary in many ways, we opt to focus on selecting a \emph{single} feature along which to move the boundary. In this case, all other features are ignored by the network. Formally, given a target class $c$, a pre-selected feature entry index $k$, and fully connected classification layer $W \in \mathbb{R}^{n \times m}$ with bias $b \in \mathbb{R}^{n}$, the feature fishing strategy modifies the parameters
\[
    W_{i, j}= 
\begin{cases}
    \beta,            & \text{if } i = c \text{ and } j = k\\
    0,              & \text{otherwise}
\end{cases}
\]
\[
    b_{i}= 
\begin{cases}
    -\beta * \theta,            & \text{if } i = c\\
    0,              & \text{otherwise}
\end{cases},
\]
where $\theta$ and $\beta$ are two server-side hyperparameters for choosing the target images and controlling the relative magnitude of the target images’ gradient signal.

To understand this construction, we consider a simple example case of two images $x_{i_1}, x_{i_2}$ where the average of feature of the two images at entry $j$ is known to be $\overline{f}_j = \frac{f_{i_1, j} + f_{i_2, j}}{2}$, and for simplicity, we assume $f_{i_1, j} < f_{i_2, j}$. A malicious server can choose $\theta$ equal to $\overline{f}_j$ so that with this feature fishing strategy, $\langle W_c, f_{i_1} \rangle + b_c = -l < 0$ and $\langle W_c, f_{i_2} \rangle + b_c = l > 0$, where $l = (f_{i_2, j} - \overline{f}_j) * \beta$. In this case, if the malicious server chooses a large enough $\beta$, the ratio of the ground-truth class' softmax entry for $x_{i_1}$ coming from the target class is much lower than the ground-truth class' softmax entry for $x_{i_2}$. 
Such cutoff also can be applied to more than two images in a straightforward way. We theoretically motivate this strategy in \cref{prop2} by showing that, if implemented properly, the server can often diminish the impact of all but one image on the gradient contribution for the target class. 

\begin{proposition}
\label{prop2}
If the server knows the CDF (assumed to be continuous) for the distribution of any feature of interest for the target class, then for any batch of data $\{x_i\}_{i=1}^N$ with $\{x_{i_j}\}_{j=1}^{M \leq N}$ images coming from the target class, if the gradient of $\mathcal{L}$ w.r.t. each data point in the batch is bounded in $l_2$ norm (where $\mathcal{L}$ is cross entropy loss), then we have that for the proposed feature fishing strategy, with probability $p\geq \frac{1}{e}$,  there exists a data point $x_* \in \{x_{i_j}\}$ so that

\[
\lVert g_c - \frac{1}{N}\nabla_\phi \mathcal{L}(x_*, c; \phi, W, b) \rVert_2 \leq \varepsilon,
\]

where $g_c$ is the component of the model update coming from data from the target class.
\end{proposition}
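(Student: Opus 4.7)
The plan is to separate the proof into a deterministic step, showing that the malicious classification layer makes only target-class images with $f_k < \theta$ contribute non-negligibly to $g_c$, and a probabilistic step, showing that a quantile-based choice of $\theta$ isolates exactly one such image with probability at least $1/e$.

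For the deterministic step, I would first compute the forward pass under the malicious $W,b$. For any input with feature vector $f \in \mathbb{R}^m$, the logit vector is zero in every coordinate except class $c$, where it equals $\beta(f_k - \theta)$, so the softmax probability assigned to class $c$ is
\[
p_c \;=\; \frac{e^{\beta(f_k-\theta)}}{e^{\beta(f_k-\theta)} + (n-1)},
\]
which converges to $1$ on $\{f_k > \theta\}$ and to $0$ on $\{f_k < \theta\}$ as $\beta \to \infty$. The backward chain-rule factor propagating through the feature extractor depends continuously on this softmax vector and vanishes in the $p_c \to 1$ limit, so each per-example gradient on the set $\{f_k > \theta\}$ tends to zero. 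Combined with the hypothesis that per-example gradients are bounded in $\ell_2$, this lets me pick $\beta$ large enough that every target-class image with $f_k > \theta$ contributes less than $\varepsilon / M$ in norm to $g_c$, so these contributions aggregate to at most $\varepsilon$.

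For the probabilistic step, I would set $\theta = F^{-1}(1/M)$, which is well-defined since $F$ is continuous. Treating the target-class features at coordinate $k$ as i.i.d.\ draws from $F$, the count of target-class images with $f_k < \theta$ is $\mathrm{Binomial}(M, 1/M)$, and the probability that it equals exactly one is
\[
\binom{M}{1}\cdot\frac{1}{M}\cdot\left(1-\frac{1}{M}\right)^{M-1} \;=\; \left(1-\frac{1}{M}\right)^{M-1} \;\geq\; \frac{1}{e},
\]
where the final inequality uses that $(1-1/M)^{M-1}$ is decreasing in $M$ with limit $1/e$. On this event the unique below-$\theta$ image serves as the desired $x_*$, and combining with the deterministic step yields the stated bound.

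The main obstacle is the uniformity required in the deterministic step: the softmax decay on $\{f_k > \theta\}$ happens at rate $e^{-\beta(f_k-\theta)}$, so an above-$\theta$ feature whose value sits just above the cutoff forces $\beta$ to be extremely large. I would handle this by exploiting continuity of $F$ one more time: enlarging $\theta$ by an arbitrarily small amount creates a uniform positive margin for every above-$\theta$ target-class feature, at a cost in the success probability that can be made negligible, after which a single choice of $\beta$ satisfies the $\varepsilon$ inequality uniformly over the batch.
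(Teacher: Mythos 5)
Your proposal follows essentially the same route as the paper's proof: the same equal-mass-bin binomial computation gives $\binom{M}{1}\frac{1}{M}(1-\frac{1}{M})^{M-1}\geq\frac{1}{e}$, and the same softmax-saturation argument (large $\beta$ drives $p_{k,c}\to 1$ for the non-fished target-class images, so their gradient factor $(p_{k,c}-1)$ vanishes while the boundedness hypothesis controls the rest) yields the $\varepsilon$ bound. Your closing remark about the uniformity of $\beta$ is a real subtlety that the paper's proof silently elides by choosing $\beta$ after conditioning on the realized features, and your margin-shifting fix is a sensible way to address it.
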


The missing ingredient for the server now is \textit{how} to find the distribution of a feature of interest from the users, so that an optimal cutoff can be chosen as visualized in \cref{fig:feature_teaser}. What helps the server here is that the mean feature value over a batch of data can always be recovered from the gradient of such data analytically \citep{phong_privacy-preserving_2017-1,geiping_inverting_2020-1}. This allows the server to measure the feature live, without the need for any auxiliary data. In the following sections we describe schemes to estimate the optimal cutoff in cross-device and cross-silo scenarios.

\begin{remark}
In order to successfully implement this feature fishing strategy, the attacker needs to be able to magnify the logits of a chosen feature and class over other classes. For example, to breach privacy for a fish image (as in \cref{fig:feature_teaser}), the attacker must magnify the logits corresponding to the ``fish" class for all images except the one that is ultimately compromised. In the scenario when there are other classes present in a user update, the attacker must accomplish this without inadvertently increasing the loss for non-targeted classes. A sufficient condition for an attacker to do this is to find a feature along which some cutoff can be made so that all data from a single class alone lies on one side of this cutoff. This allows the attacker to freely manipulate the logit for this class without increasing the loss for non-targeted data. Empirically, we find that this happens naturally in later rounds of training, especially when choosing to attack a feature with a large average value. 
However, we believe that future adjustments to our strategy are possible so that the feature fishing can occur at any stage of training. 
\end{remark}

\subsection{Cross-device}
\label{cross_device}
In this subsection, we will illustrate how the strategies of class fishing and feature fishing enable a malicious server to successfully breach privacy in cross-device federated learning \citep{kairouz_advances_2021}, where a server has an opportunity to train a model on the data from a massive number of users, yet the server might never visit the same user twice.



Because of the large number of users in cross-device FL, we propose an attack that first estimates the feature distribution from some group of users, and then applies the feature fishing strategy to catch only one data point from each of the remaining users. We call this method one-shot feature attack.

With the class fishing strategy, a malicious server can easily get the average feature of the images with the target class $c$ from a batch: Given the gradient updates for weight and bias in the classification layer, $\nabla_W \mathcal{L}$ and $\nabla_b \mathcal{L}$, the average feature can be recovered as \citep{phong_privacy-preserving_2017-1}:
\[
\overline{f} = \frac{\nabla_{W_c} \mathcal{L}}{\nabla_{b_c} \mathcal{L}}.
\]
A malicious server first applies the class attack to recover collections of average features $\{\overline{f}_{i}, s_i\}^{N}_{i=1}$, where $s_i$ is the number of images with target class $c$ for user $i$. Out of the $m$-dimensional feature vectors recorded in this manner, the attacker needs to choose a single feature for the attack. We argue that a strong choice for the attacker is to choose this feature $j$ via:
\[
j = \argmin_k \ks({\norm(\overline{f}_{i, k}),\,\mathcal{N}(0,\,1)}),
\]
where $\norm()$ evaluates the CDF of normalized data and $\ks()$ returns the statistic of the Kolmogorov-Smirnov test on two CDFs. Then, the CDF of feature entry $j$ is estimated as a normal distribution $\mathcal{N}(\mu,\,\sigma^2)$ with $\mu = \frac{1}{N}\sum_{i=1}^N\overline{f}_{i, j}$ and $\sigma = \sqrt{\frac{\sum_{i=1}^N(\overline{f}_{i, j}  \sqrt{s_i} - \mu')^2}{N}}$, where $\mu' = \frac{1}{N}\sum_{i=1}^N(\overline{f}_{i, j}  \sqrt{s_i})$.
%

The attack employs the Kolmogorov-Smirnov test to quantify the ``normality" of the distribution of the chosen feature. Overall, the attacker has limited capacity to accurately model a complicated feature distribution, which might occur for a variety of reasons. For example, for many classification models like ResNets \citep{he_deep_2015}, the ReLU before the classification can lead to a bi-modal distribution for some features, with a large mode at $0$. However, the attacker can circumvent this by simply choosing the most normal feature via the KS-Test.
For example, in \cref{fig:one_shot_est_kstest}, the CDF of the feature with KS-statistic $0.298$ is hard to estimate using the sampling means, but the feature with KS-statistic $0.031$ is almost perfectly normally distributed and can be estimated effectively from limited measurements $\overline{f}_{i,k}$.
\begin{figure}
    \centering
    \includegraphics[width=0.5\textwidth]{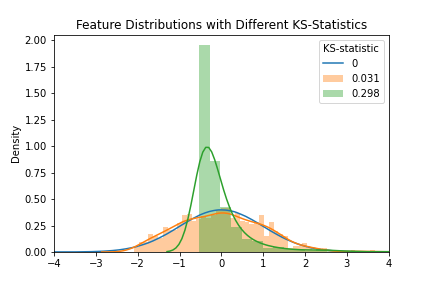}
    \vspace{-0.25cm}
    \caption{The comparison between normalized feature distributions with different KS-statistics. In the feature space of classification models, features exist which are abnormally distributed, i.e. the feature with KS-statistics of $0.298$, but other features are close to normal, i.e. with KS-statistics of $0.031$, and can be well estimated by an attacker.}
    \vspace{-0.2cm}
    \label{fig:one_shot_est_kstest}
\end{figure}

\subsection{Cross-silo}
\label{cross_silo}
In cross-silo federated learning \citep{kairouz_advances_2021}, a server is generally querying all or most users in each round. Each user is identifiable by the server. In this setting, there might not exist enough users to reliably estimate the feature distribution from their updates, and feature distributions might be significantly different between participants. However, in this scenario, an attacker can abuse the ability of the server to query in every round to iteratively update their attack. Such a \textit{binary attack} can eventually retrieve all gradients of each individual data point on the user side. The overall idea of this strategy is that if a malicious server moves the decision boundary to the average feature value $\overline{f}_j$, then they can wait for the next user update and retrieve the average feature of data points with feature value less than $\overline{f}_j$ and the average feature of data points with feature value more than $\overline{f}_j$, as well as their averaged gradient update. Iteratively, the server is thus able to construct a binary tree to find all cutoffs, retrieve all cumulative averaged gradients, and calculate the single gradient for each data point. Detailed implementation can be found in \cref{alg:binary_attack}. Instead of a full binary attack over a large number queries, the server can also aim to recover one data point, which we will call the one-shot binary attack and visualize in \cref{fig:feature_teaser}.


Under the assumption that attacked user holds $n$ data points with the target class $c$, and the feature is uniformly distributed at selected feature entry $j$, the binary attack requires $\mathcal{O}(n\log n)$ queries on average to retrieve all separate per-example gradients, and the one-shot binary attack requires $\mathcal{O}(\log n)$ queries on average. In the worst case where each cut of the binary tree only removes a single data point,  $\mathcal{O}(n^2)$ queries are necessary for the full binary attack and $\mathcal{O}(n)$ for the one-shot binary attack. However, we find this scenario empirically less likely as, often, the feature distribution appears normal.

\begin{algorithm}[tb]
   \caption{Binary Attack}
   \label{alg:binary_attack}
\begin{algorithmic}[1]
\STATE \textbf{Input:} model $m$, $\theta$, target class $c$, feature index $j$, batch size $n$
\STATE $\text{visited} \gets$ Empty Set
\STATE $G \gets$ Empty OrderedDict in descending order
\STATE $C \gets$ $[+\infty]$
\WHILE{$C$ is non-empty}
    \STATE $C_\text{new} = [\,]$
    \FOR{$c_\text{cut}$ in $C$}
        \STATE $m$ = $\text{FeatureFishing}(m, \beta = c_\text{cut}, \theta = \theta)$
        \STATE $g = $ gradient update from the user with model $m$
        \STATE $f = \nabla_{W_{c, j}} \mathcal{L} / \nabla_{b_c} \mathcal{L}$
        \IF{$f$ not in $\text{visited}$}   
            \STATE $\text{visited}$ add node $f$
            \STATE $G[f] = g \times n$
            \STATE $C_\text{new}$ append $f$
            \STATE $C_\text{new}$  append $2 \times c_\text{cut} - f$
        \ENDIF
        \STATE $C = C_\text{new}$
    \ENDFOR
\ENDWHILE
\STATE $G$ $\gets$ values of $G$
\STATE $g_\text{single} = [G[0]]$
\FOR{$i$ in $1...\text{len}(G)$}
    \STATE $g_\text{single}$ append $G[i] - G[i - 1]$
\ENDFOR
\STATE \textbf{return} $g_\text{single}$
\end{algorithmic}
\end{algorithm}

\section{Experiments}
\label{experiments}
In this section, we evaluate the empirical threat of these attacks. We show that fishing attacks can breach the privacy even for user aggregates with large batch sizes. We also provide quantitative comparisons with prior work.

\subsection{Experimentation details}
\label{exp_detail}
For our experiments, we focus on image classification as the central task. All images are from ImageNet ILSVRC 2012 \citep{russakovsky_imagenet_2015} with a size of $224 \times 224$ and include $1000$ classes in total. We use $\alpha = 1000$ for the class fishing strategy and $\theta = 1000$ for the feature fishing strategy, and these hyperparameters are large enough to suppress the $\varepsilon$ bound on the ``unwanted" gradients to be negligibly small in practice. We apply both strategies to the last linear layer of a pre-trained ResNet-18 for all experiments except \cref{april}. We also use the optimization method of \citet{geiping_inverting_2020-1} to recover the image from the gradient update caught by the attack. In the optimization, we use Adam with step size $0.1$ and $50$ iterations of warmup over total $24$K \citep{yin_see_2021}. The initialization is set to the pattern tiling of $4\times4$ random normal data introduced in \citep{wei_framework_2020}. For regularization, we use a double-opponent, isotropic vectorial total variation \citep{astrom_double-opponent_2016}.


\subsection{One-shot feature attack}
\label{one_shot_feature_exp}
For the one-shot feature attack employed in the cross-device setting, we focus on investigating how the number of users a malicious server estimates influences the probability of retrieving a separate data point. We test our estimation at several levels of number of users ranging from $50$ to $225$ (this range is chosen for simplicity, as discussed in \cref{sec:feature_strategy} it not an upper bound on the number of users). For each user, we consider $10$ images in total, and $4$ images from the target class. After the feature estimation, we then run the attack on $100$ other, separate users also with batch size $10$, with images all sampled from unseen images within the target class. For our experiments, we declare that the attacker recovers $n$ data points $\{x_i\}_{i=1}^n$ if
\[
\similarity(g_c, \sum_{i=1}^n\nabla_\phi \mathcal{L}(x_i, c; \phi, W, b)) \geq 0.95,
\]
where $g_c$ is the gradient update and $\similarity$ is the cosine similarity. Predictably, we find in \cref{fig:bs_vs_dist} that a server better estimates the feature distribution of interest with a larger number of users. As stated in \cref{prop2}, if the server accurately estimates the distribution of the chosen feature, they can expect to recover a user image $\frac{1}{e} \approx 37\%$ of the time! 

\begin{figure*}[h]
    \centering
    \subfigure[]{\includegraphics[width=0.49\textwidth]{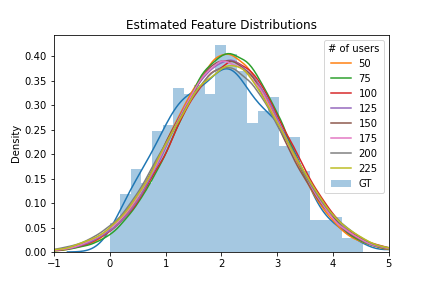}\label{fig:bs_vs_dist}}
    \subfigure[]{\includegraphics[width=0.49\textwidth]{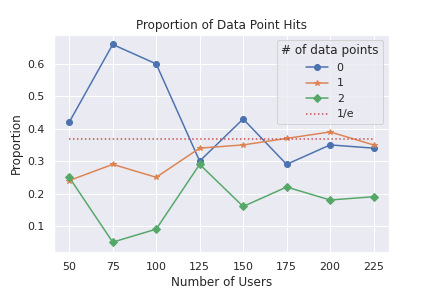}\label{fig:bs_vs_prob}}
    \caption{Influence of the number of users on the feature estimation. (a) The comparison of ground truth feature distribution and Gaussian fit to estimated feature distribution with different numbers of users. The more number of users a malicious server estimates the more estimated distribution closer to the ground true distribution. (b) The comparison of the proportions of getting $n$ data points across various numbers of users. Here the server wants exactly $1$ datapoint to fall across the selected cuts.}
    \label{fig:feat_est_bs}
\end{figure*}

Further, we include an ablation study regarding the importance of the KS-test in Tab.~\ref{table:ks-test}. We test out other heuristic methods to choose the feature entry, like the feature entry with highest standard deviation. Choosing the most normal feature is a helpful heuristic to improve attack success. However, the KS-test is not crucial for the attack to succeed, even attacking a random feature still recovers private data.

\begin{table}[]
\caption{Probability to recover $n$ data points for various method of choosing the attacked feature entry. Recovery at $n=1$ is an immediate breach in privacy.}
\label{table:ks-test}
\centering
\vspace{3mm}
\begin{center}
\begin{tabular}{ccccc}
\hline
\multirow{2}{*}{Method} & \multicolumn{4}{c}{\# of Data Points Caught}  \\ \cline{2-5} 
                        & 0    & 1             & 2    & $\geq$3 \\ \hline
\textbf{KS-Test (Ours)} & 0.35 & \textbf{0.39} & 0.18 & 0.08             \\ \hline
Highest KS-Stat         & 1.0  & .0            & .0   & .0               \\
Lowest Mean             & 1.0  & .0            & .0   & .0               \\
Highest Mean            & 0.26 & 0.37          & 0.25 & 0.12             \\
Lowest STD              & 1.0  & .0            & .0   & .0               \\
Highest STD             & 0.39 & 0.31          & 0.22 & 0.08             \\
Random                  & 0.7  & 0.19          & 0.09 & 0.02             \\ \hline
\end{tabular}
\end{center}
\vspace{-0.75cm}
\end{table}

\subsection{Binary attack}
\label{binary_attack_exp}
We also verify the stability of the one-shot binary attack employed in the cross-silo setting experimentally and evaluate how many queries a malicious server needs to breach the privacy of a user. For different batch sizes, we sample $50$ different users who each own only data points with the same label (so that the maximal number of label collisions occurs) and observe the number of queries for the one-shot binary attack. We do indeed observe  in \cref{fig:one_shot_binary_query} that the number of necessary queries scales as $\log(n)$. For example, this allows an attacker to retrieve an individual data point from an aggregation of $256$ images after, on average, $8$ queries. Given that applications, e.g. in \citet{mcmahan_learning_2018}, run over 100 rounds of FL training, this is well within the range of possibility (Over $100$ rounds, the potential aggregation size that could be breached is $2^{100}\sim1e30$ data points).

In all cases, the underlying attack we employ is the optimization recovery method of  \citet{geiping_inverting_2020-1}. We do this for the update retrieved by the one-shot binary attack with different batch sizes on groups of users generated for ImageNet data by random partitions. We plot Max-RPSNR and Max-SSIM in \cref{fig:one_shot_binary}, which represent the maximum registered PSNR and maximum SSIM value in the batch for each batch during the experiment. \cref{table:compare_prev} also provides the results between the one-shot binary attack and original attack in \citet{geiping_inverting_2020-1}. With the one-shot binary attack, we can see a noticeable improvement on the original attack.
We note that with increasing batch size, this improvement actually increases further as the larger batches contain more vulnerable data points that are empirically easier to reconstruct. 

\begin{figure*}[h]
    \centering
    \subfigure[]{\includegraphics[width=0.49\textwidth]{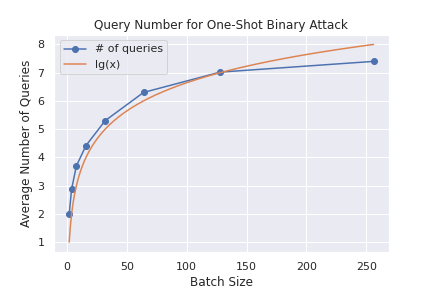}\label{fig:one_shot_binary_query}}
    \subfigure[]{\includegraphics[width=0.49\textwidth]{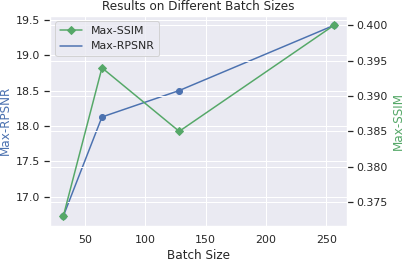}\label{fig:one_shot_binary}}
    \caption{The performance of the one-shot binary attack. (a) The average number of queries required to breach one data point in various batch sizes for the one-shot binary attack. Each data point is the average over $50$ users with separate labels. (b) Quantitative results leveraging the attack of \citet{geiping_inverting_2020-1} on large ImageNet batches.}
    \label{fig:one_shot_binary_fig}
\end{figure*}

\begin{table}[]
\centering
\caption{The numerical results between previous work \citep{geiping_inverting_2020-1} and the one-shot binary attack.}
\vspace{3mm}
\label{table:compare_prev}
\begin{tabular}{cccc}
\hline
\multirow{2}{*}{Metric}    & \multirow{2}{*}{Method} & \multicolumn{2}{c}{\textbf{Batch Size}} \\ \cline{3-4} 
                           &                         & 32                 & 50                 \\ \hline
\multirow{2}{*}{Max-RPSNR} & Prev.                    & $14.029$             & $14.568$             \\
                           & \textbf{Ours}                    & \textbf{17.415}             & \textbf{17.610}             \\ \hline
\end{tabular}
\end{table}

\subsection{Fishing with closed-form APRIL attacks}
\label{april}
Because our proposed method is attack-agnostic (it can be combined with prior optimization or analytic attacks), we also reproduce the experiments from a recent closed-form attack, APRIL \citep{lu_april_2021}, which targets vision transformer models \citep{dosovitskiy2020image}. In APRIL, a malicious server is able to solve the closed-form solution of the self-attention layer to recover the patch embeddings, and then recover the original image by inverting the patch embedding layer. This method works exceedingly well for a single data point, but fails when the batch size is greater than $1$, as then, patches from multiple images are mixed. We first modify a ViT-Small \citep{dosovitskiy2020image} to build the setting described in \citet{lu_april_2021} by removing the Layer norm and the residual connection from the first block. Then, we run both attacks on $50$ users for each batch size. \cref{table:april_result} shows the results across different batch sizes on ImageNet. The original APRIL attack fails when the batch size reaches $2$, further and further mixing patches until the recovered image is totally irrecoverable. However, the proposed fishing attack remains consistent with larger batch size, allowing this attack to scale to arbitrary batch sizes.

\begin{table}[]
\caption{Quantitative (Max-RPSNR) and qualitative comparison between original APRIL and APRIL + class fishing strategy on modified ViT-Small. The original APRIL attack fails when the batch size is barely $2$, but fishing method with APRIL can still breach privacy at a batch size of $256$.}
\centering
\vspace{3mm}
\label{table:april_result}
\begin{tabular}{ccccccc}
\hline
\multirow{2}{*}{\textbf{Method}}          & \multicolumn{4}{c}{\textbf{Batch Size / Max-RPSNR}} \\ \cline{2-5} 
                                 & 1    & 2    & 64  & 256  \\ \hline
\multicolumn{1}{c}{APRIL}        & \textbf{23.257}    & $12.664$    & $8.318$   & $8.422$    \\
\multicolumn{1}{c}{\textbf{Ours}} & $23.154$    & \textbf{21.047}    & \textbf{21.187}   & \textbf{21.481}    \\ \hline
APRIL                                                                    &
\includegraphics[align=c, scale=0.16]{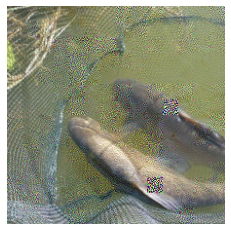}  & 
\includegraphics[align=c, scale=0.16]{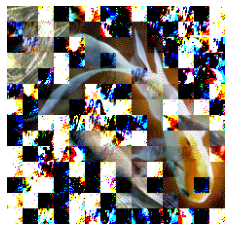}  & 
\includegraphics[align=c, scale=0.16]{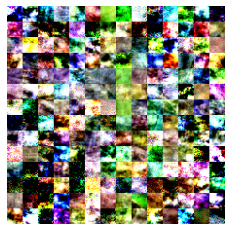} & 
\includegraphics[align=c, scale=0.16]{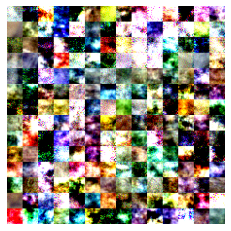}\\
\textbf{Fishing}                                                            & 
\includegraphics[align=c, scale=0.16]{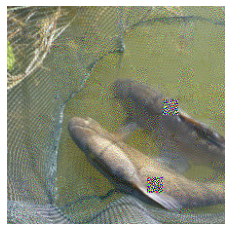}    & 
\includegraphics[align=c, scale=0.16]{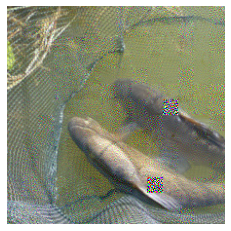}    & 
\includegraphics[align=c, scale=0.16]{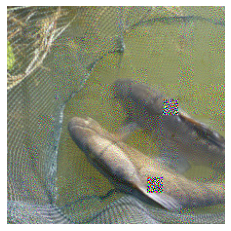}   & 
\includegraphics[align=c, scale=0.16]{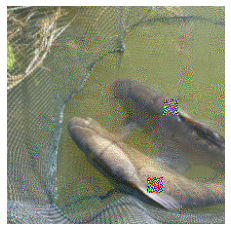}
\end{tabular}
\end{table}

\section{Defenses and Mitigation Strategies}
Under the proposed threat model, users have few defenses against this attack, as further aggregation does not guarantee protection. However, other forms of protection, or at least identification, may be possible. The attack investigated in this work, and the attacks of \citet{boenisch_when_2021} and \citet{pasquini_eluding_2021} work by producing sparse gradients. This means that the update returned by the user contains parameters where their gradients are dominated (or entirely contributed) by single examples. In theory, a user could attempt to measure and detect this attack pattern (inspecting and storing all per-example gradients) and refuse a model update. However, this would also flag natural examples, as this pattern also appears in benign models \citep{boenisch_when_2021}. At first glance, a more promising direction could be to aggregate based on a per-parameter median instead of mean, yet it is still unclear if this is at all achievable in a secure aggregation framework where the median computation has to be online and facilitated without a central party. Another potential limitation of median aggregation is that there could be parameter modifications in which the targeted data point is returned as the median, breaching privacy again.

Overall, the best defense for a user of federated learning is local differential privacy \citep{bhowmick_protection_2019}, which computes per-example gradient clipping and noise directly on the user side. A sufficient level of differential privacy breaks attacks against privacy, such as this one. However, model accuracy tend to diminish as privacy guarantees are increased \citep{jayaraman_evaluating_2019,bagdasaryan_differential_2019}.

\section{Disparate Impact}
We further want to highlight another concern that is unveiled by this attack. For the default threat model, privacy is non-uniform. The fishing attack generally targets outliers in feature space first. These outliers can correspond to underrepresented groups or different data domains in the dataset. For example, in the cross-silo setting, it will take on average $n\log(n)$ queries to recover the data point whose feature is at the median of all features in the update, whereas the outlier feature with the smallest value is recovered in only $\log(n)$ queries on average, as we also verify in \cref{fig:one_shot_binary_query}. Likewise the attack in the cross-device setting most easily fishes off the data point with the lowest feature value for each user/aggregation of users it attacks. To the best of our knowledge, this is the first time that disparate impact in privacy attacks has come up as a concern and want to highlight this for future investigations.

\section{Conclusions}
We investigate the threat model of malicious, or untrusted, server updates sent to user in federated learning, and find that this threat model allows attacker to leverage existing attacks to fish for separate data points in arbitrarily large batches. This requires no modification of the model, only of the feature of the last layer, and can break user privacy in both cross-device and cross-silo scenarios. A defense utilizing aggregation alone is not sufficient to protect user privacy against these attacks.

\section{Acknowledgements}
This work was support by DARPA’s GARD program (HR00112020007), the AFOSR MURI program, and the National Science Foundation (DMS-1912866).
\bibliography{icml_manual, zotero_library}
\bibliographystyle{icml2022}

\newpage
\appendix
\onecolumn

\section{Proofs.}
\label{proofs}

\begin{proposition*}{\textbf{(\cref{prop1})}}
Let $\phi$ denote the parameters of the feature extractor for the central model, and let $W, b$ be the weight and bias vectors, respectively, of the last classification layer. Furthermore, let $\{x_i\}_{i=1}^N$ be the collection of user data on which model updates can be calculated. Let $\{x_{i}\}_{i=1}^{m \leq N}$ be any batch data containing $k$ data points $\{x'_i\}^{k\leq m}_{i=1}$ from the target class. Then, for cross-entropy loss $\mathcal{L}$, if the gradient of $\mathcal{L}$ w.r.t. each data point in the batch is bounded in $l_2$ norm, then $\forall \varepsilon > 0$, $\exists \alpha$ so that a malicious server employing our class strategy on $W, b$ recovers a feature extractor update $g$ so that 
\[
\lVert g - \frac{1}{m}\sum_{i=1}^{k}\nabla_\phi \mathcal{L}(x'_i, c; \phi, W,b)\rVert_2 < \varepsilon.
\]

\end{proposition*}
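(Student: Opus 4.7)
The plan is to decompose the aggregated feature-extractor gradient $g = \frac{1}{m}\sum_{i=1}^m \nabla_\phi \mathcal{L}(x_i, y_i; \phi, W, b)$ into contributions from the $k$ target-class examples and the $m-k$ non-target examples, and to show that the non-target contribution can be driven below $\varepsilon$ by choosing $\alpha$ sufficiently large. First I would write out the per-example gradient through the classifier. Let $f_i = f_\phi(x_i)$ be the feature, $z_i \in \mathbb{R}^n$ the logit vector with $z_{i,j} = \langle W_j, f_i\rangle + b_j$, and $p_i = \mathrm{softmax}(z_i)$. By the chain rule,
\[
\nabla_\phi \mathcal{L}(x_i, y_i) \;=\; \sum_{j=1}^n (p_{i,j} - \mathbb{1}[y_i = j])\, \nabla_\phi \langle W_j, f_i\rangle .
\]
The crucial structural observation is that under the class fishing parameterization $W_j = 0$ for $j\neq c$, so $\nabla_\phi \langle W_j, f_i\rangle = 0$ for every $j\neq c$. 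Hence every per-example gradient collapses to a single term:
\[
\nabla_\phi \mathcal{L}(x_i, y_i) \;=\; (p_{i,c} - \mathbb{1}[y_i=c])\, \nabla_\phi \langle W_c, f_i\rangle.
\]

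Next I would analyze $p_{i,c}$ as a function of $\alpha$. Since the modification sets $b_j = \alpha$ for every $j\neq c$ and $W_j=0$, the non-target logits are all exactly $\alpha$, so
\[
p_{i,c} \;=\; \frac{e^{z_{i,c}}}{e^{z_{i,c}} + (n-1)\,e^{\alpha}}.
\]
For a non-target example $x_i$ with $y_i \neq c$, the gradient is exactly $p_{i,c}\,\nabla_\phi\langle W_c, f_i\rangle$. By the hypothesis that $\|\nabla_\phi \mathcal{L}(x_i, y_i;\phi,W,b)\|_2 \leq G$ for every $i$, and because for the target examples the factor is $(p_{i,c}-1)$ whose magnitude is at most $1$, we obtain $\|\nabla_\phi\langle W_c, f_i\rangle\|_2 \leq G$ uniformly over $i$ (for non-target examples where $p_{i,c} > 0$, we may either invoke bounded features or simply note that the logit scale is irrelevant since the full per-example gradient inherits the bound). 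Thus the non-target contribution is controlled by
\[
\Bigl\| \tfrac{1}{m}\!\!\sum_{i:\,y_i \neq c}\!\! p_{i,c}\,\nabla_\phi\langle W_c,f_i\rangle \Bigr\|_2
\;\leq\; \tfrac{m-k}{m}\,G\, \max_{i:\,y_i\neq c} p_{i,c}.
\]

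Finally I would choose $\alpha$. Since $p_{i,c} \leq e^{z_{i,c}-\alpha}/(n-1)$ and $z_{i,c}$ is bounded by $\|W_c\|\|f_i\| + |b_c|$ (a fixed constant independent of $\alpha$), taking $\alpha$ large enough makes $\max_i p_{i,c}$ arbitrarily small, so the non-target contribution falls below $\varepsilon$. Combined with the collapse identity above applied to the target-class indices, this yields
\[
\Bigl\| g - \tfrac{1}{m}\sum_{i=1}^k \nabla_\phi \mathcal{L}(x'_i, c; \phi, W, b)\Bigr\|_2 < \varepsilon,
\]
which is the desired conclusion.

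The only real obstacle is the uniform bound on $\|\nabla_\phi \langle W_c, f_i\rangle\|_2$ for non-target examples: the stated hypothesis bounds the full loss gradient, and one must argue that this also bounds the bare logit gradient (or equivalently bound the features/Jacobian of $f_\phi$), so that the smallness of $p_{i,c}$ indeed translates into smallness of the product. Everything else is essentially an exercise in the softmax identity and sending $\alpha \to \infty$.
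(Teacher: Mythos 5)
Your proposal is correct and follows essentially the same route as the paper's proof: decompose $g$ into target and non-target contributions, observe that $W_j = 0$ for $j \neq c$ collapses each per-example gradient to a single term proportional to $\nabla_\phi \langle W_c, f_i\rangle$ with coefficient $p_{i,c}$ (or $p_{i,c}-1$ for target examples), bound $p_{i,c}$ by roughly $e^{z_{i,c}-\alpha}/(n-1)$, and send $\alpha \to \infty$. The obstacle you flag at the end is the same point the paper glosses over, and it is harmless: $\nabla_\phi \langle W_c, f_i\rangle$ is independent of $\alpha$ and finite for a finite batch, so multiplying it by $p_{i,c} \to 0$ suffices without needing the loss-gradient hypothesis to bound the bare logit gradient.
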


\begin{proof}
Let $\{f_i\}_{i=1}^m = \{ \phi(x_i)\}_{i=1}^m$ be the features of the batch of data used for update $g$. Similarly, let $\{p_i\}_{i=1}^m$ be the collection of softmax outputs for these features (i.e. $p_j = \sigma(Wf_j + b) = \sigma(l_j)$ for $\sigma$ the softmax function, $l_j$ the corresponding logits). We want to show that the gradient contribution for the images coming from the non-target classes is minimal. To see this, WLOG, assume the first $k$ images from the batch are images from the targeted class. Then, we separate the model update as:

\[ 
g = \frac{1}{m} \bigg( \sum_{i=1}^{k} \nabla_\phi \mathcal{L}(x_i, c; \phi, W, b) + \underbrace{\sum_{j=k+1}^m \nabla_\phi \mathcal{L}(x_j, y_j; \phi, W, b)}_{(*)} \bigg) 
\]

We now equivalently have to show that $\frac{1}{m}\lVert (*) \rVert_2 \leq \varepsilon$. To do this, we take 
$x_* = \argmax_{x \in \{x_j\}_{j=k+1}^m} \lVert \nabla_\phi \mathcal{L}(x_j, y_j; \phi, W, b)\rVert_2$. Now, it suffices to show that $\lVert \nabla_\phi \mathcal{L}(x_*, y_*; \phi, W, b)\rVert_2 < \varepsilon$. For cross-entropy loss, the only softmax prediction used in the loss corresponds to the ground-truth label. So for logits $\{l_i \}$, so label space of size $s$, we have:  
\[
    L(x_*, y_*;\phi, W, b) = -\log(p_{*,y_*}) = -\log \bigg( \frac{e^{l_{*,y_*}}}{\sum_{i=1}^s e^{l_{*,i}}} \bigg)
\]

for any index $s \neq c$ for the logits $l_*$, by construction of the matrix $W$, we have $\nabla_{\phi(x_*)} l_{*,s} = W_s = \Vec{0}$. Therefore, when determining contributions to the gradient vector for $\phi$, we need only account for gradient contributions from the target class' logit. From here, a straightforward calculation shows that.  

\[
\frac{\partial \mathcal{L}}{\partial l_{*,c}} = \frac{\partial \mathcal{L}}{\partial p_{*,y_*}} \frac{\partial p_{*,y_*}}{\partial l_{*,c}} = \frac{-1}{p_{*,y_*}} \cdot \frac{\partial p_{*,y_*}}{\partial l_{*,c}} = \underbrace{\frac{-1}{p_{*,y_*}} \cdot \frac{\partial}{\partial l_{*,c}} \bigg( \frac{e^{l_{*,y_*}}}{\sum_{i=1}^s e^{l_{*,i}}} \bigg)}_{(\#)}
\]

Now there are two cases for $(\#)$: 

Case 1: 
$y_* \neq c$: 
\[
\frac{-1}{p_{*,y_*}} \cdot \frac{\partial}{\partial l_{*,c}} \bigg( \frac{e^{l_{*,y_*}}}{\sum_{i=1}^s e^{l_{*,i}}} \bigg) = \frac{-1}{p_{*,y_*}} \cdot  \frac{-e^{l_{*,y_*}} \cdot e^{l_{*,c}}}{\big(\sum_{i=1}^s e^{l_{*,i}} \big)^2} = \frac{1}{p_{*,y_*}} \cdot p_{*,y_*} \cdot p_{*,c} = p_{*,c}
\]

Case 2: 
$y_* = c$: 
\[
\frac{-1}{p_{*,c}} \cdot \frac{\partial}{\partial l_{*,c}} \bigg( \frac{e^{l_{*,y_*}}}{\sum_{i=1}^s e^{l_{*,i}}} \bigg) = \frac{-1}{p_{*,c}} \cdot  \frac{e^{l_{*,c}} \cdot \big(\sum_{i=1}^s e^{l_{*,i}}\big)-e^{2\cdot l_{*,c}}}{\big(\sum_{i=1}^s e^{l_{*,i}} \big)^2} = \frac{-1}{p_{*,c}} \cdot p_{*,c} \cdot (1-p_{*,c}) = p_{*,c} - 1
\]

We focus on Case 1 since this determines the gradient magnitude for $(*)$. For this case, we notice that:

$$p_{*,c} = \frac{e^{l_{*,c}}}{\sum_{i=1}^s e^{l_{*,s}}} \leq \frac{e^{l_{*,c}}}{(s-1) \cdot e^\alpha}$$

so if the server sets $\alpha = \gamma \cdot \max_i \max_s {f_{i,s}}$ - i.e. the max logit over the entire batch - for some scale factor $\gamma$, this simplifies to $p_{*,c} \leq \frac{1}{(s-1)\cdot e^{(\gamma-1)l_{*,c}}}$. Putting this all together, we have: 

\[
\lVert \nabla_\phi \mathcal{L}(x_*, y_*; \phi, W, b)\rVert_2 = \lVert\frac{\partial \mathcal{L}}{\partial l_{*,c}} \cdot \nabla_\phi l_{*,c}\rVert_2 \leq \lVert \frac{1}{(s-1)\cdot e^{(\gamma-1)l_{*,c}}} \cdot \nabla_\phi l_{*,c}\rVert_2
\]

By straightforward manipulation from this point, it is clear that the server can scale $\gamma$, and therefore $\alpha$ appropriately to ensure that 
\[ 
\lVert \nabla_\phi \mathcal{L}(x_*, y_*; \phi, W, b)\rVert_2 \leq \epsilon
\] which completes the proof. 

Note that the derivation for Case 2 is not necessary for the statement of the proposition, however it does reveal that the gradients for images coming from the target class are not suppressed in magnitude in the same way that the non-target class gradients are, and, in fact, we see that for any image $x_i$ from the target class, and any image $x_j$ not from the target class:

\[
\lim_{\alpha \to \infty} \bigg | \frac{\frac{\partial \mathcal{L}}{\partial l_{i,c}}}{\frac{\partial \mathcal{L}}{\partial l_{j,c}}} \bigg | = \lim_{\alpha \to \infty} \bigg | \frac{p_{i,c} - 1}{p_{j,c}} \bigg | = \infty
\]
\end{proof}

\begin{proposition*}{\textbf{(\cref{prop2})}}
If the server knows the CDF (assumed to be continuous) for the distribution of any feature of interest for the target class, then for any batch of data $\{x_i\}_{i=1}^N$ with $\{x_{i_j}\}_{j=1}^{M \leq N}$ images coming from the target class, if the gradient of $\mathcal{L}$ w.r.t. each data point in the batch is bounded in $l_2$ norm (where $\mathcal{L}$ is the cross entropy loss), then we have that for the proposed feature fishing strategy, with probability $p\geq \frac{1}{e}$,  there exists a data point $x_* \in \{x_{i_j}\}$ so that

\[
\lVert g_c - \frac{1}{N}\nabla_\phi \mathcal{L}(x_*, c; \phi, W, b) \rVert_2 \leq \varepsilon
\]

where $g_c$ is the component of the model update coming from data from the target class.
\end{proposition*}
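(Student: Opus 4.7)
The plan is to combine a short order statistics argument (which furnishes the $1/e$ bound) with a gradient suppression analysis analogous to the proof of \cref{prop1}.

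First I would unpack the feature fishing parametrization. Since $W$ is zero outside entry $(c,k)$, every non-$c$ logit vanishes, and the logit for class $c$ on sample $x_i$ is $\beta(f_{i,k}-\theta)$. For each target-class sample $x_{i_j}$ the chain rule then gives
\[
\nabla_\phi \mathcal{L}(x_{i_j},c;\phi,W,b) = (p_{i_j,c}-1)\,\beta\,\nabla_\phi f_{i_j,k},
\]
where $p_{i_j,c}$ is the softmax probability of class $c$ under the modified parameters. Hence $g_c$ is an average (with weight $1/N$) of such terms over the $M$ target-class samples, and the claim reduces to showing that all but one of these terms can be simultaneously killed.

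Next I would choose $\theta = F^{-1}(1/M)$, where $F$ is the continuous conditional CDF of $f_k$ given class $c$. Treating the target-class samples as i.i.d.\ draws from this distribution, each independently satisfies $f_{i_j,k} < \theta$ with probability $1/M$. The probability that exactly one target-class sample lies below $\theta$ is $M\cdot (1/M)\cdot (1-1/M)^{M-1} = (1-1/M)^{M-1}$, which decreases monotonically in $M$ toward $1/e$ and therefore satisfies $(1-1/M)^{M-1}\ge 1/e$ for every $M\ge 1$. On this event, define $x_*$ as the unique such sample.

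Finally I would push $\beta\to\infty$ to suppress every other target-class contribution. For each $x_{i_j}\neq x_*$ one has $f_{i_j,k}-\theta>0$, and the softmax identity yields $|1-p_{i_j,c}|\le (s-1)e^{-\beta(f_{i_j,k}-\theta)}$, where $s$ is the number of classes. Multiplying by $\beta$ and by the bounded $\|\nabla_\phi f_{i_j,k}\|_2$ still leaves an exponentially vanishing term. Continuity of $F$ guarantees almost surely that each such gap is strictly positive, so for the given realization one can pick $\beta$ large enough that the sum of all non-$x_*$ contributions to $g_c$ has norm at most $\varepsilon$, establishing the bound.

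The main obstacle is the probabilistic step: one must commit to a cutoff ($\theta=F^{-1}(1/M)$) that both is computable from knowledge of the CDF alone and yields the $1/e$ bound uniformly in $M$, which in particular requires the monotonicity argument for $(1-1/M)^{M-1}$. The subsequent suppression estimate is essentially the exponential-decay calculation already carried out in \cref{prop1}, adapted to a within-class partition via the scalar feature $f_k$ rather than across classes.
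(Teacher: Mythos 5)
Your proposal is correct and follows essentially the same route as the paper: the identical binomial computation over an equal-mass bin of probability $\frac{1}{M}$ yields $\left(1-\frac{1}{M}\right)^{M-1}\ge \frac{1}{e}$, and the same softmax-saturation argument (driving $p_{k,c}\to 1$ for the non-fished target-class images by taking $\beta$ large) suppresses all but one gradient contribution. If anything you are slightly more careful than the paper's proof, which absorbs $\nabla_\phi l_{k,c}$ into a ``constant'' $C$ despite its linear dependence on $\beta$, whereas you explicitly note that the exponential decay of $1-p_{i_j,c}$ dominates that linear factor.
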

\begin{proof}

If the server knows the CDF for some feature of interest for the target class, then the server can create ``bins" of equal mass corresponding to mass $\frac{1}{M}$. Let $X$ be the random variable corresponding to the number of data points from the target class in our batch that fall into the chosen bin. Then, we have:

\[
p(X=1) = \binom{M}{1} \cdot \frac{1}{M} \cdot \bigg(1 - \frac{1}{M}\bigg )^{M-1}  = \bigg(1 - \frac{1}{M}\bigg)^{M-1} \geq \frac{1}{e}
\]

Now, if we have that one and only one image, $x_*$, from the target class falls into this bin, then, by construction of our feature fishing attack, for any $\varepsilon_1, \varepsilon_2 > 0$, we can choose parameter $\beta$ so that for the ``fished" image $x_*$, we have that $p_{*,c} < \varepsilon_1$ and for any image $x_k \in \{ x_{i_j} \}$ where $x_k \neq x_*$, then $p_{k,c} > 1 - \varepsilon_2$. We can see this via the calculation in \cref{sec:feature_strategy} where the logits of the fished and non-fished logits have oposite signs and depend linearly on the server parameter $\beta$. 

Then, using a similar calculation to the proof for \cref{prop1}, we have that 

\[
\lVert g_c - \frac{1}{N}\nabla_\phi \mathcal{L}(x_*, c; \phi, W, b) \rVert_2 = \lVert \frac{1}{N} \sum_{\substack{x_k \in \{x_{i_j}\}, \\ x_k \neq x_*}} \nabla_\phi \mathcal{L}(x_k, c; \phi, W, b) 
\rVert_2\leq 
\max_{\substack{x_k \in \{x_{i_j}\}, \\ x_k \neq x_*}} \lVert \nabla_\phi\mathcal{L}(x_k, c;\phi, W, b) \rVert_2
\]
because the feature fishing attack is combined with the class fishing attack, we have the following: 
\[
= \max_{\substack{x_k \in \{x_{i_j}\}, \\ x_k \neq x_*}} \lVert \frac{\partial \mathcal{L}}{\partial l_{k, c}} \cdot \nabla_\phi l_{k,c} \rVert_2 \leq |(p_{k,c} - 1) \cdot C| \leq \varepsilon
\]

for $C$ a constant. The last inequality occurs because $p_{k,c}$ can be made arbitrarily close to $1$.

Note again that we did not use the fact that $p_{*,c}$ can be made arbitrarily close to $0$, but this is important for the empirical success of the attack as it yields a high magnitude gradient for $x_*$ which improves reconstruction.  

\end{proof}

\section{Technical Details}
We implement these attacks in a \texttt{PyTorch} framework \citep{paszke_automatic_2017} which contains all material to replicate these experiments and which we include in the supplementary material. We run all the optimization-based attacks using single \texttt{2080ti} GPUs and run the analytic attacks via APRIL on CPUs, solving the embedding layer and attention inversion under-determined problems via an SVD solver (\texttt{dgelss}). For our quantitative experiments we partition the ImageNet validation set into 100 users with the given batch size, and either allocate each user a different class or assign images to users at random (without replacement). For the optimization-based attacks, we follow the experimental protocol of \citet{geiping_inverting_2020-1}, in which batch norm layers are fixed in evaluation mode and populated from a pretrained model. In this scenario, the user does not have to send updated batch norm statistics to the server. We further assume, for simplicity, that label information is contained in the update as metadata, but note that novel label recovery algorithms as discussed in \citet{wainakh_user_2021} are highly successful, even at large aggregation sizes.

We report R-PSNR and SSIM metrics in this work. For R-PSNR (registered-PSNR), as also investigated in \citet{yin_see_2021}, we register the reconstructed image to the ground truth reference data by direction optimization using \texttt{kornia} \citep{riba_kornia_2019}. For SSIM we also compute a translation-invariant SSIM score by implementinging complex-wavelet SSIM as originally proposed in \citet{wang_translation_2005}. As underlying wavelet transform we use a dual-tree complex wavelet transform over 5 scales \citep{kingsbury_complex_2001,cotter_uses_2020} with bi-orthogonal wavelet filters.

\section{Additional Results}
\begin{figure*}[h!]
  \centering
  \includegraphics[trim={0 12cm 0 0},clip,width=0.33\textwidth]{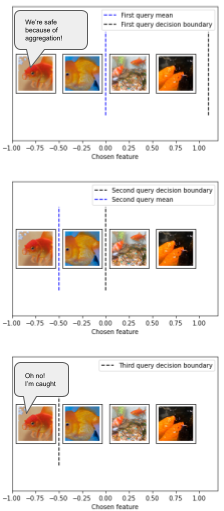}
   \includegraphics[trim={0 6cm 0 6cm},clip,width=0.33\textwidth]{assets/feature_fishing_teaser.png}
  \includegraphics[trim={0 0 0 12cm},clip,width=0.33\textwidth]{assets/feature_fishing_teaser.png}
  \caption{A high-level schematic of our feature fishing strategy. Within a class, ``Goldfish", features can be iteratively suppressed to reveal a single user image.}
  \label{fig:feature_teaser_binary}
\end{figure*}

\begin{figure*}[h]
    \centering
    \subfigure[]{\includegraphics[width=1\textwidth]{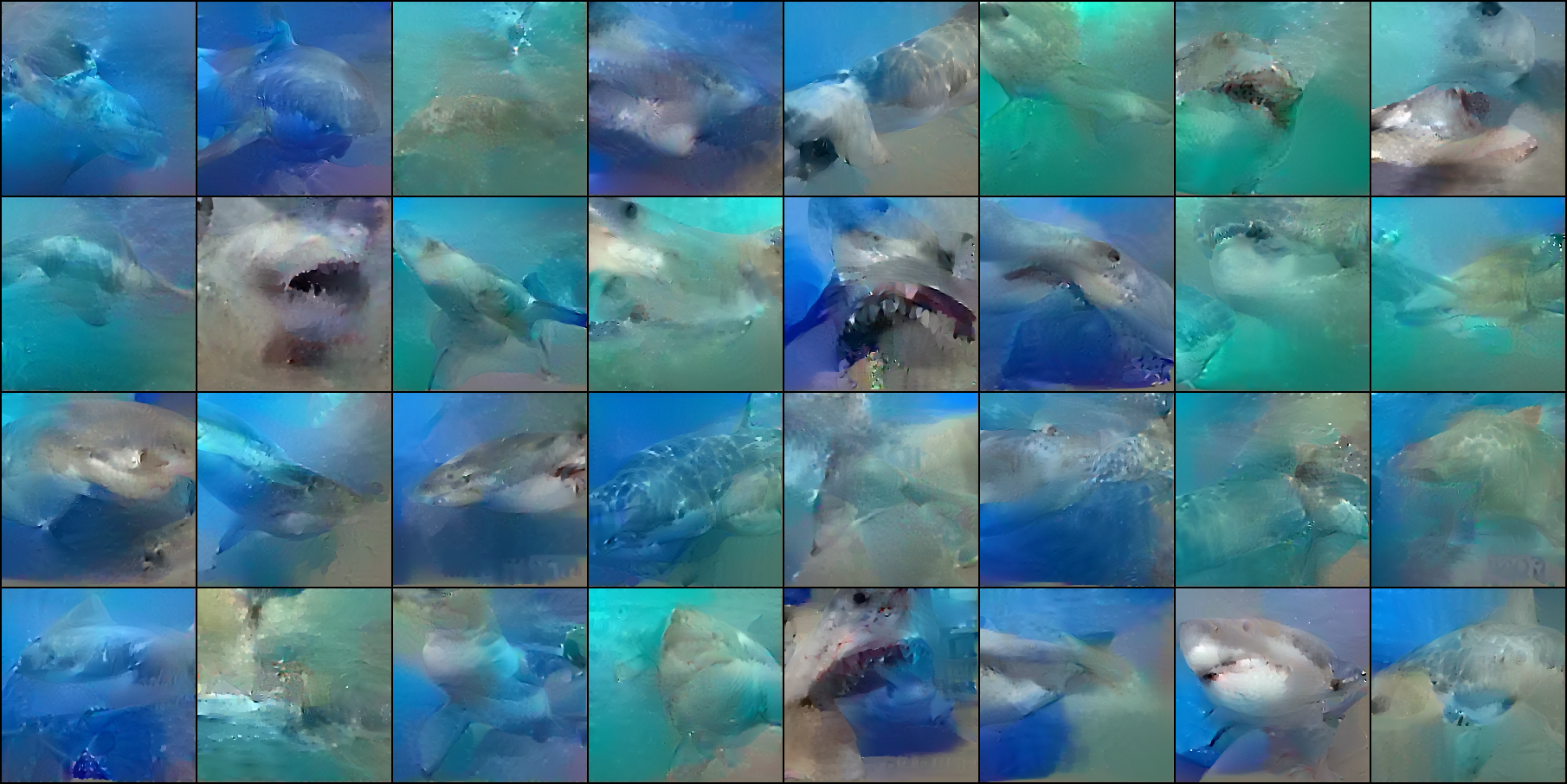}\label{fig:cls_32}}
    \subfigure[]{\includegraphics[width=1\textwidth]{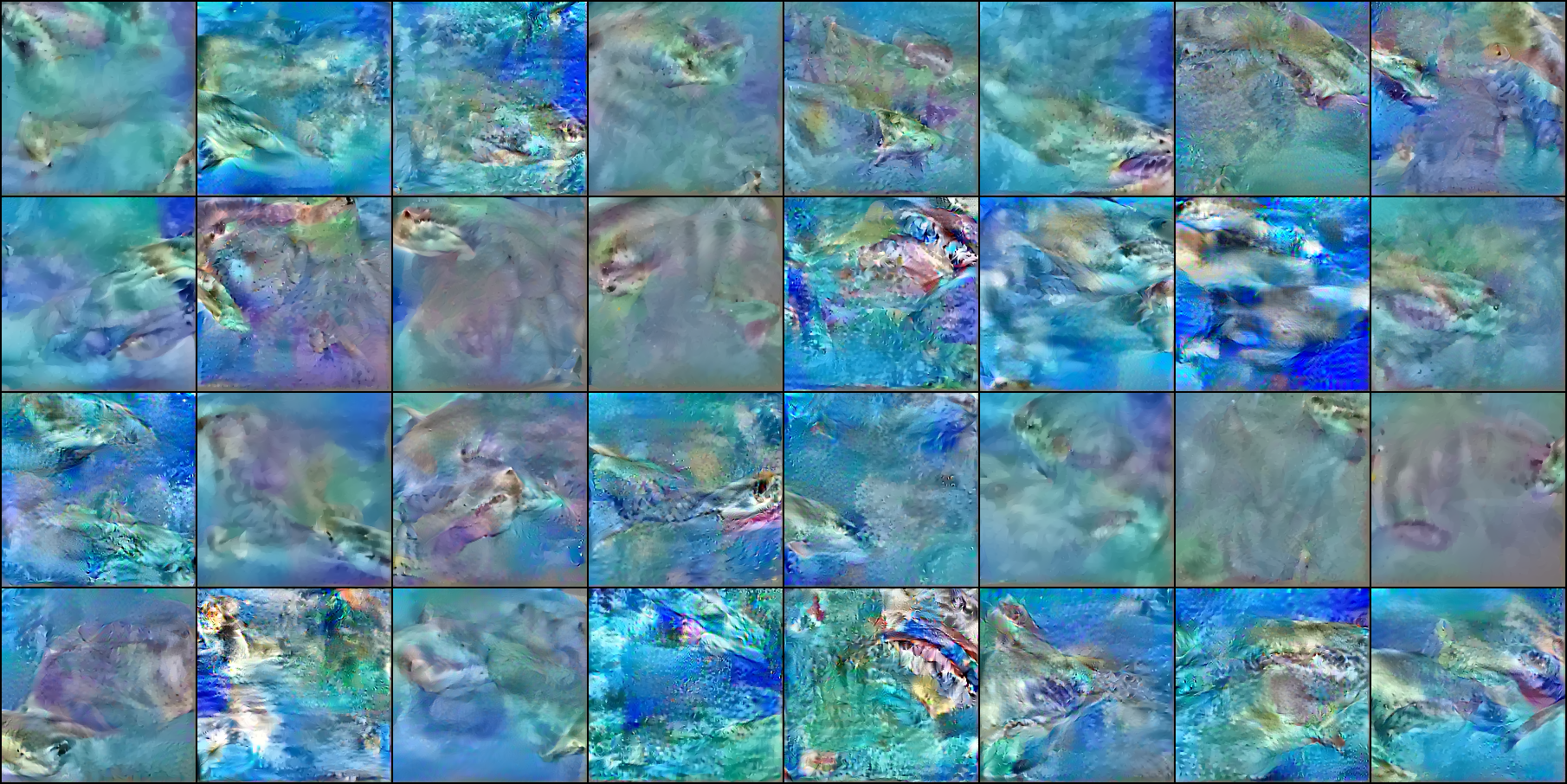}\label{fig:modern}}
    \caption{The recovered images from the batch containing class "shark" only. (a) With the class attack only without feature fishing. (b) Unmodified model.}
    \label{fig:avg_opt_result}
\end{figure*}

\begin{figure}[h]
    \centering
    \includegraphics[width=1\textwidth]{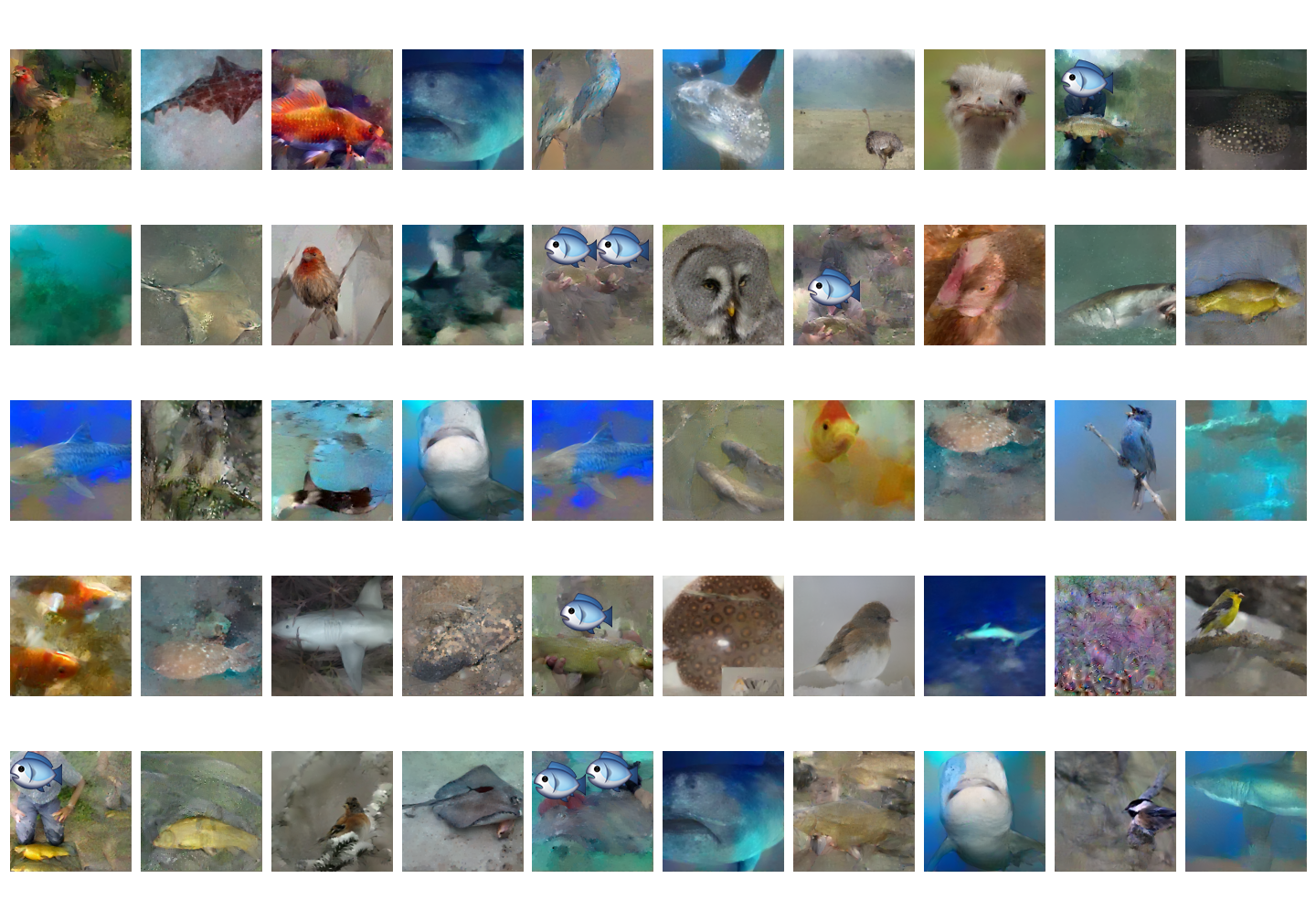}
    \caption{The recovered images from $50$ different users with batch size $128$. We add a fish emoji to human faces from the ImageNet dataset for privacy reasons. All recoveries are at least as successful as optimization-based recoveries from a single data point.}
    \label{fig:batch_128_result}
\end{figure}

\begin{figure}[h]
    \centering
    \includegraphics[width=1\textwidth]{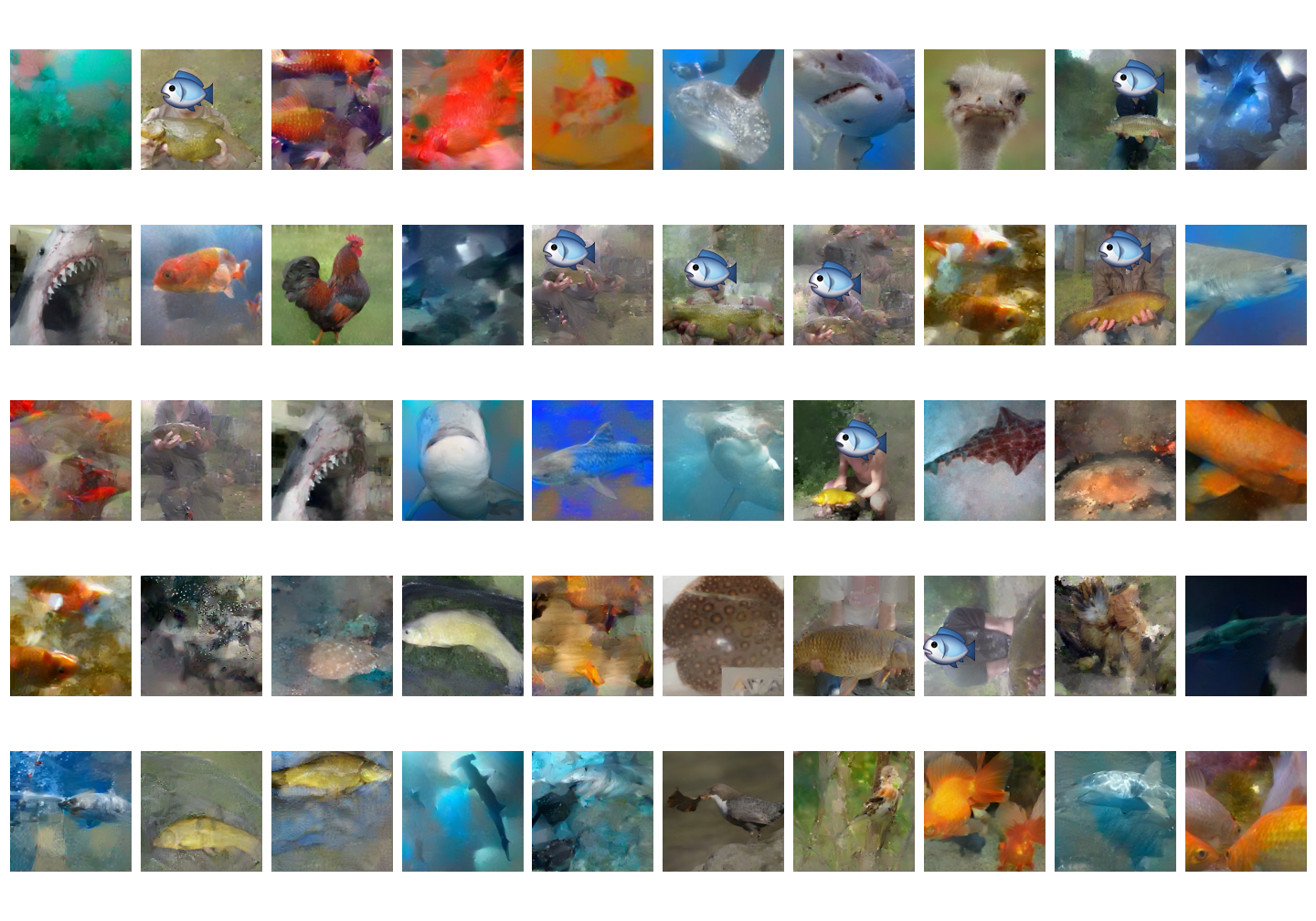}
    \caption{The recovered images from $50$ different users with batch size $256$. We add a fish emoji to human faces from the ImageNet dataset for privacy reasons. All recoveries are at least as successful as optimization-based recoveries from a single data point.}
    \label{fig:batch_256_result}
\end{figure}

\end{document}